\newtheorem{theorem}{\bf Theorem}
\newtheorem{lemma}[theorem]{\bf Lemma}
\newtheorem{definition}[theorem]{\bf Definition}
\newtheorem{corollary}[theorem]{\bf Corollary}
\title{\LARGE \bf
CDT-Dijkstra: Fast Planning of Globally Optimal Paths for All Points in 2D Continuous Space
}
\author{Jinyuan Liu, Minglei Fu, Wenan Zhang,~\IEEEmembership{Member,~IEEE,} Bo Chen,~\IEEEmembership{Member,~IEEE,} \\Ryhor Prakapovich, and Uladzislau Sychou
\thanks{This work was supported by the National Key Research and Development Program of China under Grant No. 2022YFE0121700.}
\thanks{Jinyuan Liu, Minglei Fu, Wenan Zhang, and Bo Chen are with the College of Information Engineering, Zhejiang University of Technology, Hangzhou, 310023, China.}%
\thanks{Ryhor Prakapovich and Uladzislau Sychou are with the the United Institute of Informatics Problems of the National Academy of Sciences of Belarus, Minsk, 220012, Belarus.}%
\thanks{Minglei Fu is the corresponding author, and the phone: +86-571-85292552; fax: 86-571-85292552; e-mail: fuml@zjut.edu.cn.}%
}
\begin{document}
\maketitle
\thispagestyle{empty}
\pagestyle{empty}


\begin{abstract}

The Dijkstra algorithm is a classic path planning method, which in a discrete graph space, can start from a specified source node and find the shortest path between the source node and all other nodes in the graph. However, to the best of our knowledge, there is no effective method that achieves a function similar to that of  the Dijkstra's algorithm in a continuous space. In this study, an optimal path planning algorithm called convex dissection topology (CDT)-Dijkstra is developed, which can quickly compute the global optimal path from one point to all other points in a 2D continuous space. CDT-Dijkstra is mainly divided into two stages: SetInit and GetGoal. In SetInit, the algorithm can quickly obtain the optimal CDT encoding set of all the cut lines based on the initial point $x_{init}$. In GetGoal, the algorithm can return the global optimal path of any goal point at an extremely high speed. In this study, we propose and prove the planning principle of considering only the points on the cutlines, thus reducing the state space of the distance optimal path planning task from 2D to 1D. In addition, we propose a fast method to find the optimal path in a homogeneous class and theoretically prove the correctness of the method. Finally, by testing in a series of environments, the experimental results demonstrate that CDT-Dijkstra not only plans the optimal path from all points at once, but also has a significant advantage over advanced algorithms considering certain complex tasks.

\end{abstract}

\begin{keywords}
    Path Planning, Topology, Path Homotopy, Optimal Distance, Mobile Robots
\end{keywords}

\section{INTRODUCTION} 
Considering the continuous progress in science and technology in recent years, an unprecedented rapid development of technologies has been achieved in the field of robotics \cite{wang2020overview}. As an important research direction in artificial intelligence, robotics, and automatic control, the path-planning method aims to find a continuous trajectory in the state space for robots that connect the initial and target states \cite{zhang2018multilevel,bopardikar2015multiobjective}. Path-planning algorithms are now widely used in various fields, such as autonomous driving \cite{duhautbout2022efficient}, mobile robotics \cite{tzafestas2018mobile}, game development \cite{naderi2015rt}, smart homes \cite{do2022heat}, and biomedical applications \cite{zhao2022surgical}.

Graph search-based algorithms are widely used in path-planning. These algorithms share the common characteristic of using graph models to represent the environment or search space and to find the shortest or optimal path in a graph. Dijkstra's algorithm \cite{dijkstra1959note} is a breadth-first search algorithm that can find the shortest path from each node in the graph to the source node. The A* algorithm \cite{hart1968formal} is an improvement of the Dijkstra algorithm, which incorporates a heuristic function to improve the search efficiency while ensuring the shortest path. Graph search-based path-planning algorithms are often stable and perform well; however, their search process is conducted on a discrete graph, making it difficult to solve optimal path-planning problems in continuous spaces. While \cite{lozano1979algorithm} achieves the utilization of graph search in continuous space by visibility graph, such methods are not efficient in complex environments, cause the search for visible points becomes hard, and the visibility graph can become large.
 

Randomized sampling-based algorithms have become a popular research topic in the field of motion planning owing to their ability to avoid discretization problems of graph search techniques. The rapidly-exploring random Tree (RRT) algorithm \cite{lavalle2001randomized} is the most commonly used sampling-based algorithm. Subsequently, researchers have proposed various improvements to the RRT algorithm. The RRT* algorithm incorporates a rewiring process into the RRT sampling process, ensuring both probabilistic completeness and asymptotic optimality \cite{karaman2011sampling}. The informed-RRT* algorithm was proposed for path planning in complex environments, and it uses a non-uniform sampling strategy to converge to the optimal path more quickly \cite{gammell2018informed}. The batched informed trees (BIT*) \cite{gammell2020batch} and adaptively informed trees (AIT*) \cite{strub2022adaptively} algorithms combine the advantages of graph search algorithms and sampling-based algorithms, use heuristics for all aspects of path cost to prioritize the search for highquality paths and focus on improvements. 

Based on cell decomposition, the path planning method is highly effective in addressing road connectivity and rapid planning problems in static environments. These methods mainly involve dividing the map into a series of connected regions or polygons, connecting these regions or polygons to form a connected graph, and usinge the connection relationship of cells to determine a collision-free path that the robot can run \cite{kloetzer2015optimizing,li2020new}. However, path-planning algorithms based on cell decomposition only consider connectivity and ignore other properties, such as optimality (e.g., path length) and computational complexity (e.g., time for partitioning the environment into a finite set of regions) \cite{gonzalez2017comparative}. This often requires the assistance of other algorithms to obtain a globally optimal path. For example, methods referenced in previous studies created heuristic topological maps using generalised Voronoi diagrams (GVD), which then guide the sampling process of RRT to ensure that the algorithm can find the globally optimal path \cite{chi2021generalized,huang2021path}; these heuristic paths are not inevitably optimal and the GVD initialization time for complex maps is extensive.

In recent years, the concept of path homotopy has received significant attention in the field of path planning. Homotopy is a concept in topology \cite{munkres2018elements} that refers to the ability of two paths in a continuous space to transform into one another without colliding with obstacles. In path planning, homotopy can be used to verify whether two paths are topologically equivalent, and obstacles in the environment divide the possible path space of the robot into a countable number of homotopy classes. These homotopy classes can serve as a high-level abstraction of robot paths and are a powerful tool for solving robot planning tasks \cite{bhattacharya2012search,mccammon2021topological,wakulicz2023topological}. Homotopy invariants are labels used to identify the homotopy class to which a path belongs. In a 2D space, the h-signature is commonly used as a homotopy invariant. However, the h-signature does not directly record the connectivity of the space; therefore, it cannot guide the path-planning process. In addition, it is relatively difficult to deduce the reference path in a homotopy class based on the h-signature. In our previous study \cite{arxivLiu2023homotopy}, we developed a more effective homotopy invariant, this is, a homotopy path class encoder based on convex dissection topology (CDT). The CDT Encoder can easily encode paths in a homotopy class to the same CDT encoding, and the CDT encoding can also be quickly decoded into a reference trajectory in the homotopy class. The CDT Encoder constructs a topological graph that contains the connectivity information of the environmental space, which allows it to effectively guide the path planning process.

In the previous afore \cite{arxivLiu2023homotopy} mentioned studies, graph-search-based planning methods have difficulty in solving the problem of optimal path planning in a continuous space. The cell decomposition-based method is commonly used to inspire sampling-based planning methods; however, it is difficult to ensure that these heuristic paths are optimal. In response to these issues, this study presents the highly challenging path-planning task of efficiently planning a globally optimal path from one point to all other points in a 2D continuous space.\footnote{The description here is not rigorous because there are infinitely many points in the continuous space. Thus, if an algorithm can very rapidly return the optimal path from any goal point to the initial point after processing the initial point, it is considered compliant with this description.}

In this study, we mainly investigated the aforementioned task by combining the relevant theories of cell decomposition, graphs, and homotopy invariants. The contributions of our study can be summarized as follows:
\begin{itemize}
    \item{Based on the theoretical foundation of map convex division \cite{arxivLiu2023homotopy}, we further proposed a planning principle that only considers the points on the cutting lines, which reduces the state space of the optimal path planning task from 2D to 1D.}
    \item{A method for efficiently finding the optimal path in the homotopy class in a map partitioned by convex division is proposed.}
    \item{an algorithm called CDT-Dijkstra  was proposed, which can quickly plan globally-optimal paths from one point to all other points in a 2D continuous space.}
\end{itemize}

The problem formulation, relevant mathematical definitions, and proofs are presented in Section II. Details of our algorithm are discussed in Section III. The experimental results are presented and analyzed in Section IV, which are followed by the conclusions in Section V.

\section{PRELIMINARIES}
This study was mainly based on the theoretical foundation of our previous work \cite{arxivLiu2023homotopy}. In this study, we directly adopt the relevant mathematical expressions and theoretical foundations presented in our previous study \cite{arxivLiu2023homotopy} without redundancy. Further details regarding the referenced study can be found in Sections II and III of \cite{arxivLiu2023homotopy}.\footnote{Since \cite{arxivLiu2023homotopy} is still under review, we temporarily publicise its implementation to verify the correctness of the method presented in this paper, at \url{https://arxiv.org/abs/2302.13026}. In fact, the mathematical representation of paths in this paper is consistent with the standard representation in topology.}

\subsection{Path Planning Problem Formulation}
This study investigated the problem of distance-optimized path planning for mobile robots in a 2D bounded space. The problem is defined as follows:

Let $X\subset\mathbb{R} ^2$ be the state space; the obstacle space and free space are denoted as $X_{obs}$ and $X_{free}=X/X_{obs}$, respectively. The path planning algorithm aims to determine a feasible path $f(t)$ such that
\begin{equation}
    \label{eq_III1}
    f\in P(X_{free};x_{init},x_{goal}),
\end{equation}
where $x_{init}$ denotes the initial state and $x_{goal}$ denotes the goal state. Let $P_f=P(X_{free};x_{init},x_{goal})$. The optimal path\footnote{In this study, $f^\circledast$ represents the global optimal path with the same start and end points as $f$, and $f^*$ represents the local optimal path in $[f]$.} planning problem can be defined as follows:
\begin{equation}
    \label{eq_III2}
    f^\circledast = \arg \min _{f \in P_{f}} S(f),
\end{equation}
where $S:P(X)\to \mathbb{R}_{\geqslant 0}$ is a function of the path length, which is defined as follows:
\begin{equation}
    \label{eq_III3}
    S(f)=\lim _{n\to N^+}\sum_{t=0}^{n}\|f(t+n^{-1})-f(t)\|,
\end{equation}
where $\|\cdot\|$ is the 2-Norm in Euclidean space and $N^+$ is a sufficiently large positive integer.

Compared to typical path planning tasks that only search for the optimal path between two given points, this study focuses on how to quickly find the optimal paths from $x_{init}$ to all the points in a space, given the initial point $x_{init}$.

\subsection{Dimensionality Reduction of State Space}
\begin{lemma}
    \label{NSCLOP} 
    A necessary and sufficient condition for $f^* \in P(X_{free})$ to be the shortest path in $[f^*]$ is that, for any $t_1,t_2 \in [0,1]$, there exists the shortest path $g(t)=f^*(t_1+(t_2-t_1)t)$ in $[g]$.
\end{lemma}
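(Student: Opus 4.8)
The plan is to prove the two implications separately, with essentially all of the content residing in the necessity direction. Sufficiency is immediate once one is careful about quantifiers: taking $t_1=0$ and $t_2=1$ in the hypothesis yields $g=f^*$, so the assumption already asserts that $f^*$ is the shortest path in $[f^*]$. Necessity, which is the statement actually used later to collapse the search space, I would establish by contraposition: I assume some restriction of $f^*$ fails to be shortest in its own homotopy class and build from it a strictly shorter competitor in $[f^*]$.

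Concretely, suppose that for some $t_1,t_2\in[0,1]$ the path $g(t)=f^*(t_1+(t_2-t_1)t)$ is not a shortest path in $[g]$. The case $t_1=t_2$ is vacuous (a constant path is trivially shortest), and the case $t_1>t_2$ reduces to $t_1<t_2$ since reversing orientation preserves both $S$ and the path-homotopy relation, so I may take $t_1<t_2$. Then there is $g'\in P(X_{free};g(0),g(1))$ with $g'\simeq g$ rel endpoints and $S(g')<S(g)$. Define $f'$ as the concatenation of $f^*|_{[0,t_1]}$, then $g'$, then $f^*|_{[t_2,1]}$, each reparametrized affinely onto the corresponding subinterval of $[0,1]$. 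Since $g'$ has the same endpoints $f^*(t_1),f^*(t_2)$ as the removed arc $f^*|_{[t_1,t_2]}$ and is path-homotopic to it, and since concatenation with the fixed outer arcs respects path homotopy, I get $f'\simeq f^*$ rel endpoints, i.e. $f'\in[f^*]$.

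It then remains to compare lengths. From the definition of $S$ in \eqref{eq_III3} as a limit of polygonal approximations, $S$ is additive under concatenation and invariant under reparametrization, hence $S(f^*)=S(f^*|_{[0,t_1]})+S(g)+S(f^*|_{[t_2,1]})$ and $S(f')=S(f^*|_{[0,t_1]})+S(g')+S(f^*|_{[t_2,1]})$, so $S(f')=S(f^*)-S(g)+S(g')<S(f^*)$, contradicting that $f^*$ is shortest in $[f^*]$. I expect the only delicate points to be bookkeeping-level: verifying reparametrization invariance and concatenation additivity of $S$ directly from \eqref{eq_III3}, checking that path homotopy is preserved under splicing (standard, but worth stating), and handling the degenerate endpoint cases $t_1\in\{0,1\}$ or $t_2\in\{0,1\}$ in which one of the outer arcs is a constant path and the concatenation has fewer pieces. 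None of these is a genuine obstacle; the ``hard'' part is simply being disciplined about reparametrizations when gluing paths all nominally defined on $[0,1]$.
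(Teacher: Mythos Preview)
Your proposal is correct and follows essentially the same route as the paper: sufficiency by specializing to $t_1=0$, $t_2=1$, and necessity by splicing a strictly shorter homotopic arc $g'$ into $f^*$ to contradict minimality, using additivity of $S$ under concatenation. The only cosmetic difference is that the paper absorbs the $t_1>t_2$ case by writing the outer pieces via $a=\min(t_1,t_2)$ and $b=\max(t_1,t_2)$ rather than invoking orientation reversal, and it does not spell out the reparametrization and degenerate-endpoint checks you (rightly) flag as bookkeeping.
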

\begin{proof}

    Necessary: The law of proof by contradiction is used herein. Suppose $f^*$ is the shortest path in $[f^*]$, and $\exists t_1, t_2 \in [0,1]$ such that $g(t)=f^*(t_1+(t_2-t_1)t)$ is not the shortest path in $[g]$; that is, there exists $g' \simeq_p g$ such that $S(g')<S(g)$. The path $f^*$ can be expressed as $f^* \cong_p g_s*g*g_e$, where
    \begin{align}
        \label{eq_NSCLOP1}
         & g_s(t) = f^*(a \cdot t),   & a = \min (t_1,t_2), \\
         & g_e(t) = f^*(b + (1-b) t), & b = \max (t_1,t_2).
    \end{align}
    Hence,
    \begin{equation}
        \label{eq_NSCLOP2}
        \begin{split}
            S(f^*) &= S(g_s) + S(g) + S(g_e)\\
            &< S(g_s) + S(g') + S(g_e)\\
            &< S(g_s * g' * g_e).
        \end{split}
    \end{equation}
    Therefore, there exists a path $g_s * g' * g_e$ in $[f^*]$ that is shorter than $f^*$. This contradicts our original hypothesis. Therefore, the sufficiency condition is true.

    Sufficient: When the latter of the proposition holds, let $t_1=0$ and $t_2=1$. At this moment $g(t)=f^*(t)$. Therefore, $f^*$ is the shortest path in $[f^*]$. Therefore, the necessary condition is true.
\end{proof}
\begin{theorem}
    \label{SSDR} 
    For any optimal path $f^* \in P(X_{free})$ where $x_s=f^*(0)$, $x_e=f^*(1)$. The $f^*$ must be expressed in the form of multiple line splices as follows.
    When $x_s$ and $x_e$ are within the same convex polygon,
    \begin{equation}
        \label{eq_SSDR1}
        f^* \cong_p l^{x_e}_{x_s}.
    \end{equation}
    When $x_s$ and $x_e$ are not within the same convex polygon,
    \begin{equation}
        \label{eq_SSDR2}
        f^* \cong_p l_{x_0}^{x_1} * l_{x_1}^{x_2} * \cdots * l_{x_{n-1}}^{x_n},
    \end{equation}
    where $x_0=x_s$, $x_n=x_e$, $x_1,\dotsc,x_{n-1}$ is the intersection of $f^*$ in turn with the cutlines.
\end{theorem}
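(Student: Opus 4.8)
The plan is to prove Theorem~\ref{SSDR} by combining a local convexity argument with Lemma~\ref{NSCLOP}. The key observation is that inside a single convex polygon, the straight segment between any two points is contained in $X_{free}$ (by convexity and the fact that the polygon is a free region of the convex dissection), and the straight segment is the unique shortest path between its endpoints in Euclidean space. So the entire argument reduces to two claims: (i) an optimal path that stays inside one convex cell must be a straight segment, and (ii) an optimal path that passes through several cells must be straight within each cell, with the ``breakpoints'' occurring exactly where the path crosses the cutlines.

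\textbf{Step 1: the single-polygon case.} First I would handle \eqref{eq_SSDR1}. Suppose $x_s$ and $x_e$ lie in the same convex polygon $C$. The straight segment $l_{x_s}^{x_e}$ lies in $C \subseteq X_{free}$, hence $l_{x_s}^{x_e}$ is a feasible path and is homotopic to $f^*$ (two paths with the same endpoints inside a convex — hence simply connected — region are always homotopic). Since $f^*$ is the shortest path in $[f^*]$ and $S(l_{x_s}^{x_e}) = \|x_e - x_s\| \le S(f^*)$ with equality iff $f^*$ is (a reparametrization of) the straight segment, we conclude $f^* \cong_p l_{x_s}^{x_e}$.

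\textbf{Step 2: the multi-polygon case.} For \eqref{eq_SSDR2}, let $0 < s_1 < s_2 < \dots < s_{n-1} < 1$ be the parameter values at which $f^*$ crosses the cutlines, and set $x_i = f^*(s_i)$, $x_0 = x_s$, $x_n = x_e$. Between consecutive crossings, the sub-path $f^*|_{[s_{i},s_{i+1}]}$ lies within the closure of a single convex cell. By Lemma~\ref{NSCLOP}, this sub-path (suitably reparametrized to $[0,1]$) is itself the shortest path in its own homotopy class; applying Step~1 to it gives $f^*|_{[s_i,s_{i+1}]} \cong_p l_{x_i}^{x_{i+1}}$. Concatenating these homotopies over $i = 0,\dots,n-1$ yields \eqref{eq_SSDR2}. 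I would also note that $n$ is finite: an optimal path cannot cross any cutline infinitely often, since each such crossing that is not ``essential'' could be removed by a shortcut within a cell, strictly decreasing the length — so after finitely many crossings the path is minimal.

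\textbf{The main obstacle} I anticipate is making rigorous the claim that $f^*$ meets the cutlines in a \emph{finite, well-defined} sequence of points and that the pieces between crossings genuinely stay within one convex cell. Pathological optimal paths could in principle run along a cutline, or touch it tangentially without transversal crossing, and one must argue these situations either do not occur for a genuine shortest-in-class path or can be reparametrized away without changing the homotopy class or the length. Handling this carefully — probably by appealing to the structure of the convex dissection from \cite{arxivLiu2023homotopy} (cutlines are finitely many line segments, cells are open convex polygons whose closures cover $X_{free}$) and to the fact that a length-minimizing path in a class has no ``detours'' — is where the real work lies; the rest is the elementary Euclidean fact that straight segments are the unique geodesics, plus bookkeeping of homotopies via Lemma~\ref{NSCLOP}.
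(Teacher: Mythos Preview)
Your proposal is correct and follows essentially the same route as the paper: convexity handles the single-cell case, the cutline crossings decompose $f^*$ into sub-paths, and Lemma~\ref{NSCLOP} forces each sub-path to be optimal in its own class, hence a straight segment. The paper's own proof is actually much terser than yours --- it simply invokes ``properties of convex polygons'' for \eqref{eq_SSDR1} and cites the convex-dissection construction of \cite{arxivLiu2023homotopy} for the existence and finiteness of the crossing points, so your discussion of the ``main obstacle'' goes beyond what the paper itself spells out.
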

\begin{proof}
    According to the properties of convex polygons, (\ref{eq_SSDR1}) is established, and according to the characteristics of the convex dissection method used in \cite{arxivLiu2023homotopy} (refer to Subsection III.A of \cite{arxivLiu2023homotopy}), the points $x_0,x_1,\cdots ,x_n$ in (\ref{eq_SSDR2}) exist. Therefore, $f^*$ can be expressed as follows:
    \begin{equation}
        \label{eq_SSDR3}
        f^* \cong_p f_1 * f_2 * \cdots * f_n ,
    \end{equation}
    where $f_k \in P(X_{free},x_{k-1},x_k)$. Moreover, based on \textbf{Lemma~\ref{NSCLOP}}, the optimal path $f^*$ must guarantee that each of its segments is optimal; hence, for any $f_k$, $f_k = l^{x_k}_{x_{k-1}}$.
\end{proof}

According to \textbf{Theorem~\ref{SSDR}}, we only need to consider the points on the cutlines. The state space of the optimal path planning task is reduced from the 2D space of $X_{free}$ to a 1D space on the cutlines.

\subsection{Problem Simplification}
In this subsection, we simplify the problem addressed in this study. Under the guidance of \textbf{Lemma~\ref{NSCLOP}} and \textbf{Theorem~\ref{SSDR}}, we can easily obtain the following corollary:

\begin{corollary}
    \label{CROP} 
    When $x_s$ and $x_e$ are not within the same convex polygon, there must exist a point $x_c$ on its adjacent\footnote{We define the cutlines adjacent to a point as the cutlines of the convex polygon it belongs to, and consider the points on the cutline as belonging to two convex polygons simultaneously.} cutlines such that the optimal path $f^\circledast_e$ from $x_s$ to $x_e$ can be expressed as follows:
    \begin{equation}
        \label{eq_CROP1}
        f^\circledast_e \cong_p f^\circledast_c * l^{x_e}_{x_c},
    \end{equation}
    where $f^\circledast_c$ is the optimal path from $x_s$ to $x_c$.
\end{corollary}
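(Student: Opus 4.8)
The plan is to apply \textbf{Theorem~\ref{SSDR}} to the optimal path $f^\circledast_e$ from $x_s$ to $x_e$ and then read off the final segment. Since $x_s$ and $x_e$ are, by hypothesis, not within the same convex polygon, \textbf{Theorem~\ref{SSDR}} gives the representation $f^\circledast_e \cong_p l_{x_0}^{x_1} * l_{x_1}^{x_2} * \cdots * l_{x_{n-1}}^{x_n}$ with $x_0 = x_s$, $x_n = x_e$, and $x_1,\dotsc,x_{n-1}$ the successive intersections of $f^\circledast_e$ with the cutlines. The natural candidate for $x_c$ is $x_{n-1}$: it lies on a cutline, and since $x_{n-1}$ and $x_n = x_e$ lie in a common convex polygon (the last one traversed), that cutline is adjacent to $x_e$ in the sense of the footnote. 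Then I would set $f^\circledast_c := l_{x_0}^{x_1} * \cdots * l_{x_{n-2}}^{x_{n-1}}$, which is the restriction of $f^\circledast_e$ to the parameter interval ending at its last cutline crossing, so that $f^\circledast_e \cong_p f^\circledast_c * l_{x_{n-1}}^{x_e} = f^\circledast_c * l_{x_c}^{x_e}$, giving \eqref{eq_CROP1}.

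The remaining obligation — and the real content of the corollary — is to show that the prefix $f^\circledast_c$ thus defined is actually the \emph{optimal} path from $x_s$ to $x_c$, not merely some path. This is exactly where \textbf{Lemma~\ref{NSCLOP}} enters: applying the lemma to $f^\circledast_e$ with $t_1 = 0$ and $t_2$ chosen as the parameter value at which $f^\circledast_e$ reaches $x_{n-1}$, we conclude that the subpath $g(t) = f^\circledast_e(t_2 t)$, which is precisely $f^\circledast_c$ up to reparametrization, is the shortest path in its own homotopy class $[f^\circledast_c]$. I would then need the additional step that this shortest-in-class path is in fact globally shortest from $x_s$ to $x_c$; I expect this to follow because any path from $x_s$ to $x_c$ can be completed by appending $l_{x_c}^{x_e}$ to yield a path from $x_s$ to $x_e$, and if some path from $x_s$ to $x_c$ were strictly shorter than $f^\circledast_c$, concatenating it with $l_{x_c}^{x_e}$ would beat $f^\circledast_e$ — contradicting global optimality of $f^\circledast_e$. (One should be slightly careful that the concatenated path still lies in $X_{free}$, which it does since both pieces do.)

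The step I expect to be the main obstacle is the handedness/adjacency bookkeeping: verifying rigorously that the last crossing point $x_{n-1}$ does lie on a cutline \emph{adjacent to} $x_e$ under the paper's convention, and that the decomposition of \textbf{Theorem~\ref{SSDR}} genuinely has a nonempty final straight segment $l_{x_{n-1}}^{x_e}$ lying inside a single convex polygon (i.e., that $x_e$ is not itself on the same cutline as $x_{n-1}$, or if it is, handling that degenerate case). Once the indexing is pinned down, the optimality transfer via \textbf{Lemma~\ref{NSCLOP}} plus the truncation argument above is routine, so I would keep the write-up short: state the $x_c = x_{n-1}$ choice, invoke \textbf{Theorem~\ref{SSDR}} for the splice form, invoke \textbf{Lemma~\ref{NSCLOP}} for local optimality of the prefix, and close with the one-line contradiction argument upgrading local optimality in $[f^\circledast_c]$ to global optimality from $x_s$ to $x_c$.
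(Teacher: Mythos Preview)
Your proposal is correct and matches the paper's approach: the paper does not spell out a proof but simply states that the corollary follows ``under the guidance of \textbf{Lemma~\ref{NSCLOP}} and \textbf{Theorem~\ref{SSDR}}'', exactly the two ingredients you invoke (Theorem~\ref{SSDR} for the splice form with $x_c = x_{n-1}$, and the concatenation contradiction upgrading the prefix to a global optimum). Your write-up is in fact more careful than the paper's, which leaves all of these details implicit; the only redundancy is that the concatenation argument already yields global optimality of $f^\circledast_c$ directly, so the separate appeal to Lemma~\ref{NSCLOP} for local optimality is not strictly needed.
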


Based on the afore mentioned, the optimal path $f^\circledast_{e}$ and $f^\circledast_c$ share the same CDT encoding in \textbf{Corollary~\ref{CROP}}; that is, $\Gamma \circ f^\circledast_e \cong_p \Gamma \circ f^\circledast_c$. Thus we can derive a new constraint for the global optimal path, that is, when $x_{goal}$ and $x_{init}$ are not within the same convex polygon, the following is obtained:
\begin{equation}
    \label{eq_III4}
    \Gamma \circ f^\circledast \in \Gamma \circ \bigcup_{k} C_k^\circledast = \bigcup_{k} (\Gamma \circ C_k^\circledast).
\end{equation}
Here, $C_k^\circledast$ represents the set of optimal paths for the all points on each adjacent cutline of $x_{goal}$. Note, the set $C_k^\circledast$ has an infinite number of elements, but its CDT encoding set $\Gamma \circ C_k^\circledast$ has a finite number of elements. Therefore, for a given $x_{init}$, if we can obtain the optimal CDT encoding set of all the cutlines, it is easy to obtain the global optimal path from any $x_{goal}$ to $x_{init}$. These optimal CDT encoding sets of cutlines can be obtained by iterating outward from the adjacent cutlines of $x_{init}$ using (\ref{eq_CROP1}) and (\ref{eq_III4}). For convenience, we use $\xi_{c_k}$ to represent the optimal CDT encoding set $\Gamma \circ C_k^\circledast$ corresponding to cutline $c_k$ in the following.

\section{APPROACH}
\begin{figure*}[!h]
    \centering
    \subfloat[]{\includegraphics[width=1.2in]{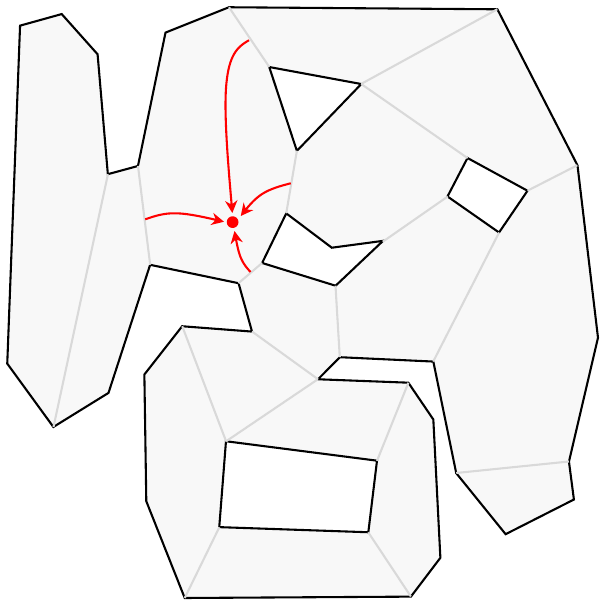}
        \label{fig_process:map1}}
    \hfil
    \subfloat[]{\includegraphics[width=1.2in]{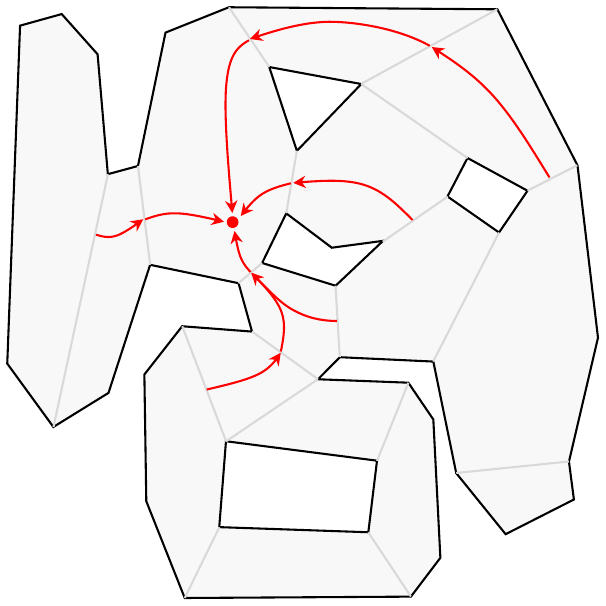}
        \label{fig_process:map2}}
    \hfil
    \subfloat[]{\includegraphics[width=1.2in]{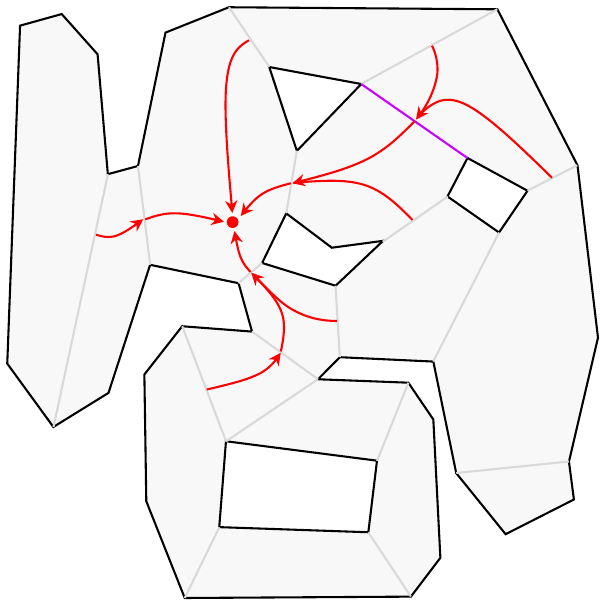}
        \label{fig_process:map3}}
    \hfil
    \subfloat[]{\includegraphics[width=1.2in]{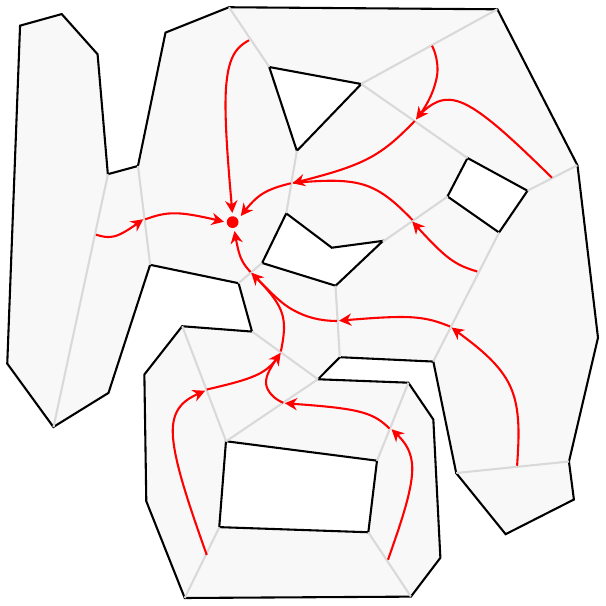}
        \label{fig_process:test1}}
    \hfil
    \subfloat[]{\includegraphics[width=1.2in]{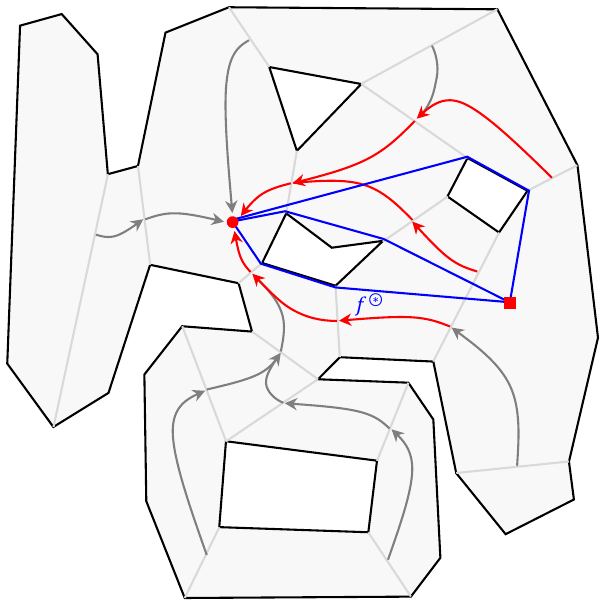}
        \label{fig_process:test2}}
    \hfil
    \caption{Visual illustration of CDT-Dijkstra at work. Firstly, (a) create the CDT encoding sets of the cutlines adjacent to the initial point, followed by (b) gradually updating the CDT encoding sets for other cutlines outward. (c) The diagram of Algorithm~\ref{alg:alg3}, when the red line is added, it affects the CDT encoding set of its adjacent cutline. (d) The computation of CDT encoding set for all cutlines is completed. (e) All valid CDT encodings adjacent the goal point are found and the globally optimal path is obtained.}
    \label{fig_process}
\end{figure*}

In this subsection, we propose and explain the details of a unique optimal path planning algorithm, CDT-Dijkstra, based on a convex division topological graph. The characteristic of this algorithm is as follows: (i) given an initial point $x_{init}$, it can quickly obtain the optimal CDT encoding set for all the cutlines, and (ii) for any goal point $x_{goal}$ in the free space $X_{free}$, it can return the optimal path from $x_{init}$ to $x_{goal}$ at a very high speed. The CDT corresponding to its characteristics can be divided into two independent stages (threads), which are namely SetInit and GetGoal. A visualization of CDT-Dijkstra at work is shown in Fig.~\ref{fig_map}.

\subsection{Main Algorithm}
The main part of the CDT-Dijkstra is shown in Algorithm~\ref{alg:alg1}. For lines 1-2 of the algorithm we use the method in \cite{arxivLiu2023homotopy} to construct the topological graph $X_{tp}$ corresponding to the free space $X_{free}$ based on the input $Map$. In line 3, the optimal CDT encoding set for each cutline will be created. Thereafter, the algorithm enters the SetInit stage (lines 4-19).

In the SetInit stage, the algorithm obtains the initial point of the planning task (line 5) and updates the sets $\{\xi_{c_1},\xi_{c_2},\cdots ,\xi_{c_n} \}$ when $x_{init}$ changes (lines 6-18). To introduce the pseudocode of the update process, we define a special class of symbol $\hat{\mathrm{f}}_{c}$ used in the pseudocode.
\begin{definition}[Symbol $\hat{\mathrm{f}}_{c}$]
    \label{Symbol}
    Let $c$ be the cutline connecting two convex polygons, denoted by $X_{\mathrm{x}_1}$ and $X_{\mathrm{x}_2}$, respectively. The corresponding nodes of $X_{\mathrm{x}_1}$ and $X_{\mathrm{x}_2}$ in $X_{tp}$ are denoted by $\mathrm{x}_1$ and $\mathrm{x}_2$, respectively. The symbol $\hat{\mathrm{f}}_{c}$ represents both $(\mathrm{x}_1,\mathrm{x}_2)$ and $(\mathrm{x}_2,\mathrm{x}_1)$ in $P(X_{tp})$. When $\hat{\mathrm{f}}_{c}$ is involved in the $*$ operation, it uses the form that makes the operation meaningful.
\end{definition}

During the update process, all encoding sets are first emptied. Thereafter, the encoding sets for the cutlines adjacent to $x_{init}$ are set, and a set $\vartheta_{ex}$ for recording the existing cutlines is created (lines 8-10). In line 11 of the algorithm, a list $\vartheta_{add}$ is created to record the cutlines that can be added to $\vartheta_{ex}$.

The algorithm iterates in lines 12-18. During each iteration, the algorithm pops a cutline $c_k$ from $\vartheta_{add}$ and uses Algorithm~\ref{alg:alg2} to update the encoding sets $\xi_{c_k}$ corresponding to $c_k$ (lines 13,14). The algorithm then pushes $c_k$ into $\vartheta_{ex}$, and pushes the unused cutlines adjacent to $\xi_{c_k}$ into $\vartheta_{add}$. Similar to the rewiring of RRT*, the change in the optimal homotopy class code in $\xi_{x_k}$ has an impact on other encoding sets. Therefore, at the end of each iteration, we optimized the existing encoding sets using Algorithm~\ref{alg:alg3}. When $\vartheta_{add}$ is empty, the update to sets $\xi_{c_1},\xi_{c_2},\cdots ,\xi_{c_n} $ is complete.
\begin{algorithm}[H]
    \caption{CDT-Dijkstra.}\label{alg:alg1}
    \begin{algorithmic}[1]
        \STATE {\textbf{Input: }}$Map$
        \STATE $X_{tp} \gets $BuildTopologyGraph$(Map)$
        \STATE Initialise $\{\xi_{c_1},\xi_{c_2},\cdots ,\xi_{c_n} \}$ with the number of cutlines
        \STATE \textbf{loop}
        \STATE \hspace{0.5cm} $x_{init} \gets$ WaitingTaskInput()
        \STATE \hspace{0.5cm} \textbf{if} $x_{init}$ is changed \textbf{then}
        \STATE \hspace{1.0cm} Fill $\{\xi_{c_1},\xi_{c_2},\cdots \xi_{c_n}\}$ with $\varnothing$
        \STATE \hspace{1.0cm} $\vartheta _{ex} \gets$ AdjacentCutlines$(x_{init})$
        \STATE \hspace{1.0cm} \textbf{for $c_k \in \vartheta _{ex} $ do}
        \STATE \hspace{1.5cm} push $(\Gamma \circ e_{x_{init}}) * \hat{\mathrm{f}}_{c_k} $ to $\xi_{c_k}$
        \STATE \hspace{1.0cm} $\vartheta _{add} \gets$ AdjacentCutlines$(\vartheta _{ex})$
        \STATE \hspace{1.0cm} \textbf{repeat}
        \STATE \hspace{1.5cm} $c_k \gets$ PopFirst$(\vartheta _{add})$
        \STATE \hspace{1.5cm} $\xi_{c_k} \gets$ GetHomotopyClassesForCutline$(c_k)$
        \STATE \hspace{1.5cm} Push AdjacentCutlines$(c_k)/ \vartheta _{ex}$ to $\vartheta _{add}$
        \STATE \hspace{1.5cm} Push $c_k$ to $\vartheta _{ex}$
        \STATE \hspace{1.5cm} OptimizedExistingHomotopyClasses$(c_k)$
        \STATE \hspace{1.0cm} \textbf{until} $\vartheta _{add}$ is empty
        \STATE \textbf{end loop}
    \end{algorithmic}
    \label{alg1}
\end{algorithm}

After the SetInit stage, CDT-Dijkstra can enter the GetGoal stage, and the global optimal path from $x_{init}$ to $x_{goal}$ can be quickly obtained using Algorithm~\ref{alg:alg5}. In Algorithm~\ref{alg:alg5}, if $x_{init}$ and $x_{goal}$ are in the same convex polygon, the algorithm directly returns $l^{x_{goal}}_{x_{init}}$ (lines 2,3). Otherwise, the algorithm finds all the CDT encoding on cutlines adjacent to $x_{goal}$ (lines 4-7). Finally, these CDT encodings are mapped into the path space $P(X_{free};x_{init},x_{goal})$ using Algorithm~\ref{alg:alg4}, and the global optimal path $f^\circledast$ is determined (lines 8-14). In line 10 of the algorithm, we need to determine whether $x_{goal}$ is in the convex polygon corresponding to the last element of $\mathrm{f}$ because the cutline belongs to two different convex polygons at the same time, and its encoding set may record the CDT encoding to the other convex polygon.
\begin{algorithm}[H]
    \caption{GetGoal.}\label{alg:alg5}
    \begin{algorithmic}[1]
        \STATE {\textbf{Input: }}$x_{goal}$
        \STATE \textbf{if} $x_{init}, x_{goal}$ in the same convex polygon \textbf{then}
        \STATE \hspace{0.5cm} \textbf{return} $l_{x_{init}}^{x_{goal}}$
        \STATE $\xi_{goal} \gets \varnothing$
        \STATE $\vartheta_{goal} \gets$ AdjacentCutlines$(x_{goal})$
        \STATE \textbf{for $c_k \in \vartheta_{goal}$ do}
        \STATE \hspace{0.5cm} $\xi_{goal} \gets \xi_{goal} \cup \xi_{c_k}$
        \STATE ${cost} \gets \infty$
        \STATE \textbf{for $\mathrm{f} \in \xi_{goal}$ do}
        \STATE \hspace{0.5cm} \textbf{if} $x_{goal} \in \mathrm{f}(-1)$ \textbf{then}
        \STATE \hspace{1.0cm} $f^* \gets $ GetShortestPath$(\mathrm{f},x_{init},x_{goal})$
        \STATE \hspace{1.0cm} \textbf{if ${cost} > S(f^*)$ then}
        \STATE \hspace{1.5cm} $f^\circledast \gets f^*$
        \STATE \hspace{1.5cm} ${cost} \gets S(f^*)$
        \STATE \textbf{return} $f^\circledast$
    \end{algorithmic}
    \label{alg5}
\end{algorithm}

\subsection{Get Homotopy Classes For Cutline}
The function of Algorithm~\ref{alg:alg2} is to update the encoding set of the input cutline $c_k$ according to the current encoding sets of adjacent cutlines in $\vartheta_{ex}$. According to (\ref{eq_CROP1}) and (\ref{eq_III4}), the element in the encoding set of $c_k$ must be expressed as the product of the element in its adjacent encoding sets and $\hat{\mathrm{f}}_{c_k}$. In Algorithm~\ref{alg:alg2}, we first determine all the codes of the encoding sets adjacent to $c_k$ (lines 2,3). In lines 5-14 of Algorithm~\ref{alg:alg2}, $\xi_u$ is used to compute the optimal path for each point on $c_k$.\footnote{We will discretize $c_k$; when the interval is very small, the discretisation has no effect on the planning results.} In this process, if encoding $\mathrm{f}$ can make $\mathrm{f}*\hat{\mathrm{f}}_{c_k}$ meaningful and $\mathrm{f}*\hat{\mathrm{f}}_{c_k}$ can obtain the optimal path to a certain point (lines 8-14), then $\mathrm{f}*\hat{\mathrm{f}}_{c_k}$ is placed into an encoding set $\xi_{out}$.
\begin{algorithm}[H]
    \caption{GetHomotopyClassesForCutline.}\label{alg:alg2}
    \begin{algorithmic}[1]
        \STATE {\textbf{Input: }}$c_k$
        \STATE $\vartheta _{near} \gets$ AdjacentCutlines$(c_k)\cap \vartheta _{ex}$
        \STATE $\xi_{u} \gets \bigcup_{c_i \in \vartheta_{near}} \xi_{c_i}$
        \STATE $\xi_{out} \gets \varnothing$
        \STATE \textbf{for $t \in [0,1]$ do}
        \STATE \hspace{0.5cm} $x_t \gets c_k(t)$
        \STATE \hspace{0.5cm} ${cost}_{old} \gets \infty $
        \STATE \hspace{0.5cm} \textbf{for $\mathrm{f} \in \xi_{u}$ do}
        \STATE \hspace{1.0cm} \textbf{if} $\mathrm{f}*\hat{\mathrm{f}}_{c_k}$ is meaningful \textbf{then}
        \STATE \hspace{1.5cm} $f^* \gets$GetShortestPath$(\mathrm{f}*\hat{\mathrm{f}}_{c_k},x_{init},x_t)$
        \STATE \hspace{1.5cm} ${cost}_{new} \gets S(f^*)$
        \STATE \hspace{1.5cm} \textbf{if ${cost}_{old}>{cost}_{new}$ then}
        \STATE \hspace{2.0cm} $\mathrm{f}_{min} \gets \mathrm{f}*\hat{\mathrm{f}}_{c_k}$
        \STATE \hspace{2.0cm} ${cost}_{old} \gets {cost}_{new}$
        \STATE \hspace{0.5cm} $\xi_{out} \gets \xi_{out} \cup \{\mathrm{f}_{min}\}$
        \STATE \textbf{return} $\xi_{out}$
    \end{algorithmic}
    \label{alg2}
\end{algorithm}
\subsection{Optimized Existing Homotopy Classes}
The purpose and process of the optimization by Algorithm~\ref{alg:alg3} are similar to the rewiring operation of RRT*. First, Algorithm~\ref{alg:alg3} creates a list $\vartheta_{r}$ of cutlines to be optimized based on the input $c_k$, which subsequently starts the optimization of the encoding set of cutlines in $\vartheta_{r}$. The optimization of the encoding set of all the cutlines in $\vartheta_{ex}$ is complete when $\vartheta_{r}$ is empty (lines 3-9). In each optimization iteration, the algorithm pops a cutline $c_r$ to be optimized from $\vartheta_{r}$ and recomputes the encoding set of that cutline. If the encoding set of this cutline changes, the algorithm will push the cutlines adjacent to $c_r$ in $\vartheta_{ex}$ into $\vartheta_{r}$.
\begin{algorithm}[H]
    \caption{OptimizedExistingHomotopyClasses.}\label{alg:alg3}
    \begin{algorithmic}[1]
        \STATE {\textbf{Input: }}$c_k$
        \STATE $\vartheta_{r} \gets$ AdjacentCutlines$(c_k)\cap \vartheta _{ex}$
        \STATE \textbf{repeat}
        \STATE \hspace{0.5cm} $c_{r} \gets$ PopFirst$(\vartheta_{r})$
        \STATE \hspace{0.5cm} $\xi_{temp} \gets$ GetHomotopyClassesForCutline$(c_r)$
        \STATE \hspace{0.5cm} \textbf{if $\xi_{c_r} \neq \xi_{temp}$ then}
        \STATE \hspace{1.0cm} $\xi_{c_r} \gets \xi_{temp}$
        \STATE \hspace{1.0cm} Push AdjacentCutlines$(c_k)\cap \vartheta_{ex}$ to $\vartheta_{r}$
        \STATE \textbf{until} $\vartheta_{r}$ is empty
    \end{algorithmic}
    \label{alg3}
\end{algorithm}

\subsection{Get Shortest Path}
Algorithm~\ref{alg:alg4} is primarily used to fast obtain the shortest path in the homotopy path class. In line 2 of Algorithm~\ref{alg:alg4}, $\epsilon_1,\epsilon_2,\cdots ,\epsilon_m$ are the cutlines through which $\mathrm{f}$ passes subsequently. In line 3 $x_0=x_{s}$, and $x_{m+1}=x_{e}$, $x_1, x_2, \cdots , x_m$ are the midpoints of the cutlines $\epsilon_1,\epsilon_2,\cdots ,\epsilon_m$ respectively. Thereafter, Algorithm~\ref{alg:alg4} enters an iterative compression process for $f^*$ (lines 5-9). From line 9, it is evident that $f^*$ can be given by the sequence $\{x_0,x_1,\cdots ,x_m,x_{m+1}\}$. Thus lines 6-9 can be abstracted as map $\Psi : P(X_{free}) \to P(X_{free})$, and map $\Psi$ consists of $m$ submaps $\psi_1, \psi_2, \cdots,\psi_m$ (line 8), such that:
\begin{equation}
    \label{eq_IVD1}
    \Psi=\psi_m \circ \psi_{m-1} \circ \cdots \circ \psi_1.
\end{equation}
For any $\psi_k \in \{\psi_1, \psi_2, \cdots,\psi_m\}$ there is:
\begin{equation}
    \label{eq_IVD2}
    \begin{aligned}
         & l^{x_1}_{x_0}* \cdots *l^{x'_k}_{x_{k-1}}*l^{x_{k+1}}_{x'_k}* \cdots *l^{x_{n+1}}_{x_n}                   \\
         & = \psi_k \circ ( l^{x_1}_{x_0}* \cdots *l^{x_k}_{x_{k-1}}*l^{x_{k+1}}_{x_k}* \cdots *l^{x_{n+1}}_{x_n} ).
    \end{aligned}
\end{equation}
To simplify, let:
\begin{equation}
    \label{eq_IVD3}
    g = l^{x_1}_{x_0}* \cdots *l^{x_{k-1}}_{x_{k-2}}
    \quad \text{and} \quad
    h = l^{x_{k+2}}_{x_{k+1}}* \cdots *l^{x_{n+1}}_{x_n}.
\end{equation}
According to line 7 of the algorithm,
\begin{equation}
    \label{eq_IVD4}
    \begin{aligned}
        S(g*l^{x_k}_{x_{k-1}}*l^{x_{k+1}}_{x_k}*h) & = S(g) + S(l^{x_k}_{x_{k-1}}*l^{x_{k+1}}_{x_k}) + S(h)            \\
                                                   & \geqslant  S(g) + S(l^{x'_k}_{x_{k-1}}*l^{x_{k+1}}_{x'_k}) + S(h) \\
                                                   & = S(g*l^{x'_k}_{x_{k-1}}*l^{x_{k+1}}_{x'_k}*h)                    \\
                                                   & = S(\psi_k \circ (g*l^{x_k}_{x_{k-1}}*l^{x_{k+1}}_{x_k}*h)).
    \end{aligned}
\end{equation}
Hence, every map in $\{\psi_1, \psi_2, \cdots,\psi_m\}$ is compressed by $f^*$. According to (\ref{eq_IVD1}), the map $\Psi$ is also compressed for $f^*$.

Therefore, Algorithm~\ref{alg:alg4} can find the shortest path between input points $x_s$ and $x_e$ in the homotopy class $\mathrm{f}$. In addition, the solution to line 8 of the algorithm is simply given by the following: (i) If lines $l^{x_{k+1}}_{x_{k-1}}$ and $\epsilon_k$ intersect, $x_k$ is their intersection, and (ii) if lines $l^{x_{k+1}}_{x_{k-1}}$ and $\epsilon_k$ do not intersect, $x_k$ is the endpoint of $\epsilon_k$ that is closest to $l^{x_{k+1}}_{x_{k-1}}$.

According to Line 5 of Algorithm~\ref{alg:alg2}, we must execute Algorithm~\ref{alg:alg4} for each point $x_t$ on the segmentation line (line 10 of Algorithm~\ref{alg:alg2}), which is clearly time-consuming. However, for the neighboring points $x_{t'}$ of $x_t$, the locally optimal paths calculated using Algorithm~\ref{alg:alg4} for each CDT encoding $\mathrm{f}*\hat{\mathrm{f}}_{c_k}$ are significantly similar. Therefore, in practical programming, it is only necessary to run Algorithm~\ref{alg:alg4} once for $x_t$ at $t=0$. For the subsequent $x_t$, when running Algorithm~\ref{alg:alg4}, $l_{x_0}^{x_1} * l_{x_1}^{x_2} * \cdots * l_{x_n}^{x_{n+1}}$ in line 4 can be directly replaced with the locally optimal path of $x_{t-dt}$, which can significantly greatly reduce the number of iterations of the 5-10 lines. During testing, this optimization method can reduce the number of iterations to 2-4.

\begin{algorithm}[H]
    \caption{GetShortestPath.}\label{alg:alg4}
    \begin{algorithmic}[1]
        \STATE {\textbf{Input: }}$\mathrm{f},x_s,x_e$
        \STATE Use $\mathrm{f}$ to get the cutline sequence $\{\epsilon_1,\epsilon_2,\cdots ,\epsilon_m\}$
        \STATE $\{x_0,x_1,\cdots ,x_m,x_{m+1}\} \gets \{x_s,\epsilon_1(\frac{1}{2}),\cdots ,\epsilon_m(\frac{1}{2}),x_e\}$
        \STATE $f^* \gets l_{x_0}^{x_1} * l_{x_1}^{x_2} * \cdots * l_{x_n}^{x_{n+1}}$
        \STATE \textbf{repeat}
        \STATE \hspace{0.5cm} ${cost} \gets S(f^*) $
        \STATE \hspace{0.5cm} \textbf{for $k \in \mathbb{N}^m_1$ do}
        \STATE \hspace{1.0cm} $x_k = \mathop {\arg \min }\limits_{x \in \epsilon_k}S(l_{x_{k-1}}^x * l_x^{x_{k+1}})$
        \STATE \hspace{0.5cm} $f^* \gets l_{x_0}^{x_1} * l_{x_1}^{x_2} * \cdots * l_{x_n}^{x_{n+1}}$
        \STATE \textbf{until} $S(f^*) - cost < \varepsilon $
        \STATE \textbf{return} $f^*$
    \end{algorithmic}
    \label{alg4}
\end{algorithm}

\section{RESULTS}
In this section, we present the results of the several simulation experiments that were conducted to demonstrate the effectiveness and efficiency of CDT-Dijkstra. An Intel NUC (next unit of computing) computer was used, which was an Intel i7-1165G7 (4.7GHz) with 32 GB of RAM. For the simulation, we used C++ on Ubuntu 20.04. As shown in Fig.~\ref{fig_map}, we chose three regular environments (cluttered, trap, and maze) and a challenging mega map (2D map of the German Museum) to test our algorithm in this experiment. Fig.~\ref{fig_map}(e)-(h) demonstrate the results of the convex dissection of the four test environments using the method in \cite{arxivLiu2023homotopy}; details of the initialized construction of the topological space $X_{tp}$ are listed in \textbf{Table~\ref{tab:table1}}.

\begin{table}
    \begin{center}
        \caption{Initialization of Four Maps\label{tab:table1}}
        \centering
        \begin{tabular}{|c|c|c|c|c|}
            \hline
            \makecell[c]{Map                                        \\Name} & \makecell[c]{Map Image\\Resolution} & \makecell*[c]{Cutlines} & \makecell*[c]{Convex\\Polygons} & \makecell*[c]{Time (ms)}\\
            \hline
            \makecell[c]{Cluttered} & $1000*1000$ & $135$ & $122$ & $19.7\pm2.5$   \\
            \hline
            \makecell[c]{Trap}      & $1000*1000$ & $38$  & $37$  & $9.2\pm3.3$    \\
            \hline
            \makecell[c]{Maze}      & $1000*1000$ & $95$  & $95$  & $15.4\pm3.6$   \\
            \hline
            \makecell[c]{Museum}    & $9706*9706$ & $646$ & $608$ & $397.2\pm18.9$ \\
            \hline
        \end{tabular}
    \end{center}
\end{table}

CDT-Dijkstra is a unique path-planning algorithm that consists of two parts: SetInit and GetGoal. To demonstrate that this algorithm achieves the effect of needing one run to obtain the optimal path from all points to the initial point, we set up four planning tasks in each environment, where the optimal path was planned from the initial point S to goal points 1-4, respectively. In the experiment, CDT-Dijkstra executed SetInit\footnote{In the experiment, we discretized the cutlines with an interval of two pixels in length.} once for the initial point S in each environment, and subsequently used GetGoal to obtain the optimal path for each goal point separately; the time spent on SetInit and GetGoal was recorded. This operation was performed 100 times for each environment. The average values of SetInit and GetGoal times for each task are shown in Fig.~\ref{fig_SetGet}.

\begin{figure}[!t]
    \centering
    \includegraphics[width=3.4in]{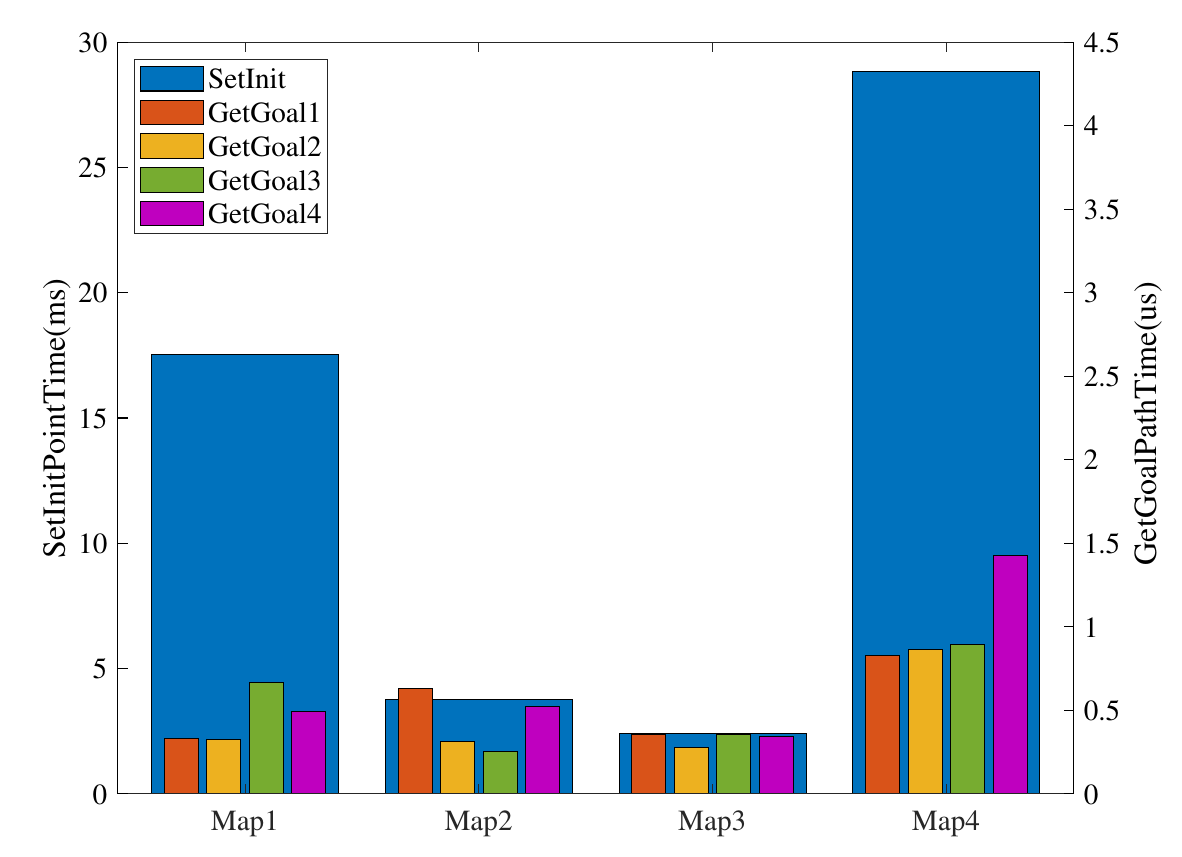}
    \caption{CDT-Dijkstra test results. The timescales of the SetInit stage and GetGoal stages are on the left and right, respectively.}
    \label{fig_SetGet}
\end{figure}

Based on the experiments, the SetInit stage of CDT-Dijkstra consumed the majority of the algorithm time; however, it maintains a high level of real-time performance (the average usage was 17.5 ms, 3.7 ms, 2.4 ms, and 28.8 ms). Moreover, the time used by the GetGoal stage remains in the sub-microsecond range, even for complex tasks such as Task 4 in Fig.~\ref{fig_map}(d), where the time used by GetGoal does not exceed 1.5 us; these times are almost negligible. Therefore, we consider that CDT-Dijkstra achieves a fast search for the optimal path from one point to all the other points in a continuous space.

To determine the advantages and disadvantages of CDT-Dijkstra compared to general optimal path planning algorithms, we present a comparison of the results between the proposed CDT-Dijkstra algorithm and the following four advanced algorithms in the OMPL open source framework: RRT* algorithm, PRM* algorithm \cite{karaman2011sampling}, Informed-RRT* \cite{gammell2018informed}, and advanced batch informed trees algorithm (ABIT*) \cite{strub2020advanced}. Two parameters were used to compare the performance of the algorithms $t_{init}$, including the time of the initial solution and $t_{1\%}$, which is the time required to determine a solution of cost $1.01 \times Cost_{opt}$, where $Cost_{opt}$ is the optimal cost. These four algorithms were executed 100 times for each task, as shown in Fig.~\ref{fig_map}(a)-(h), and the mean values of $t_{init}$ and $t_{1\%}$ for these algorithms for each task are shown in Fig.~\ref{fig_map}(i)-(l), respectively. For a better comparison with other algorithms, the completion times of the CDT-Dijkstra algorithms of SetInit and AllInit are indicated by the blue and red lines, respectively, where AllInit is equal to the SetInit time plus the corresponding map initialization\footnote{In practice, a map only needs to be initialized once. The initialisation result can be saved as a file that can be directly accessed for future use.} time show in \textbf{Table~\ref{tab:table1}}.

\begin{figure*}[!t]
    \centering
    \subfloat[]{\includegraphics[width=1.0in]{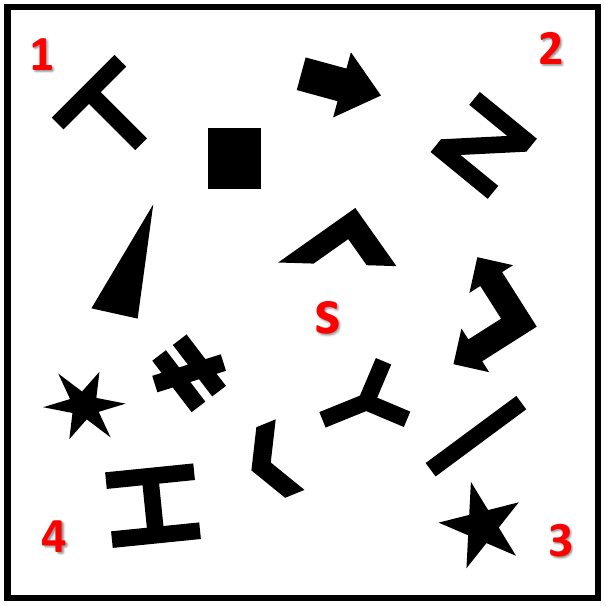}
        \label{fig_map:map1}}
    \hfil
    \subfloat[]{\includegraphics[width=1.0in]{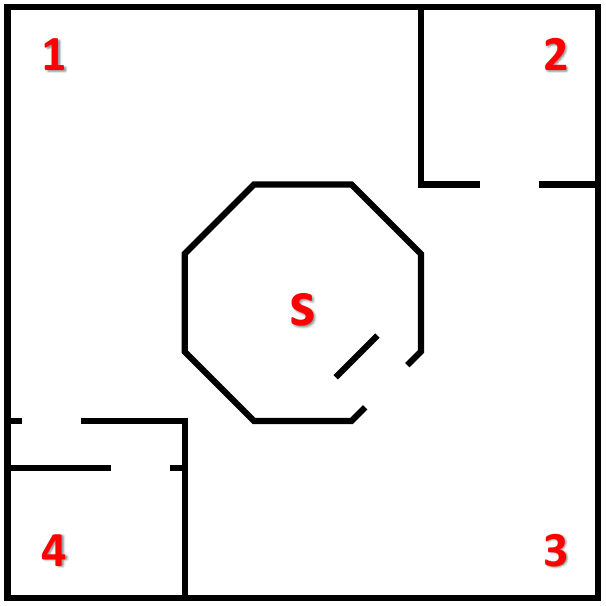}
        \label{fig_map:map2}}
    \hfil
    \subfloat[]{\includegraphics[width=1.0in]{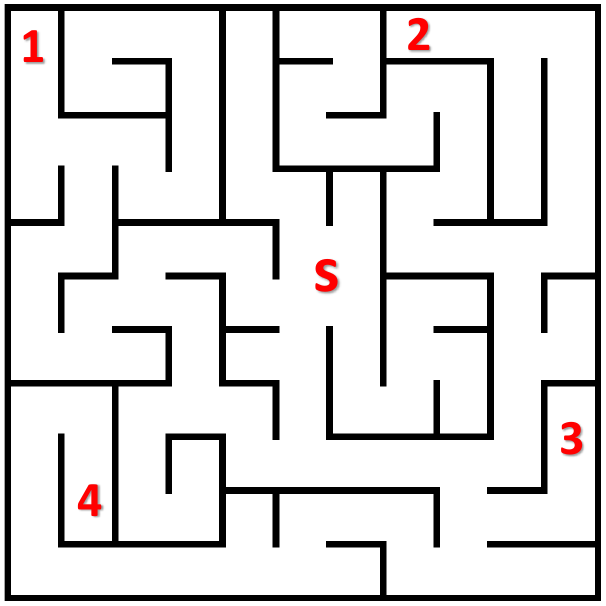}
        \label{fig_map:map3}}
    \hfil
    \subfloat[]{\includegraphics[width=1.0in]{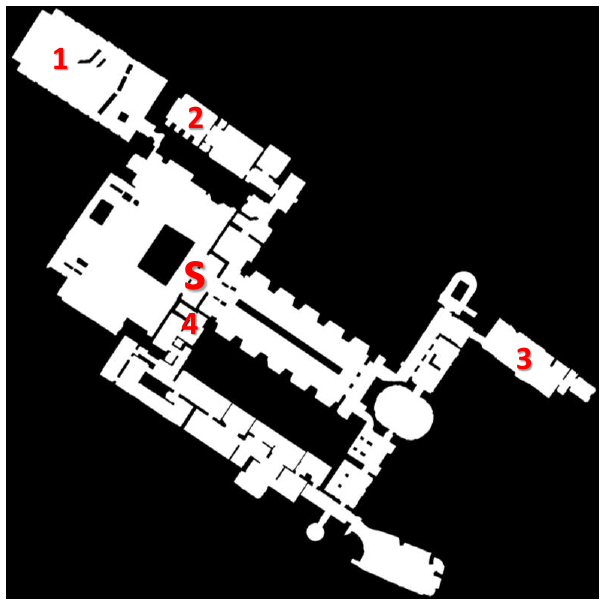}
        \label{fig_map:map4}}
    \\
    \subfloat[]{\includegraphics[width=1.0in]{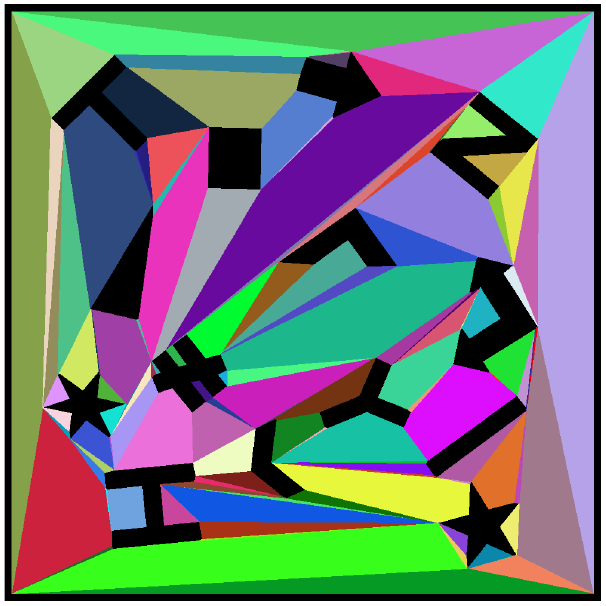}
        \label{fig_map:init1}}
    \hfil
    \subfloat[]{\includegraphics[width=1.0in]{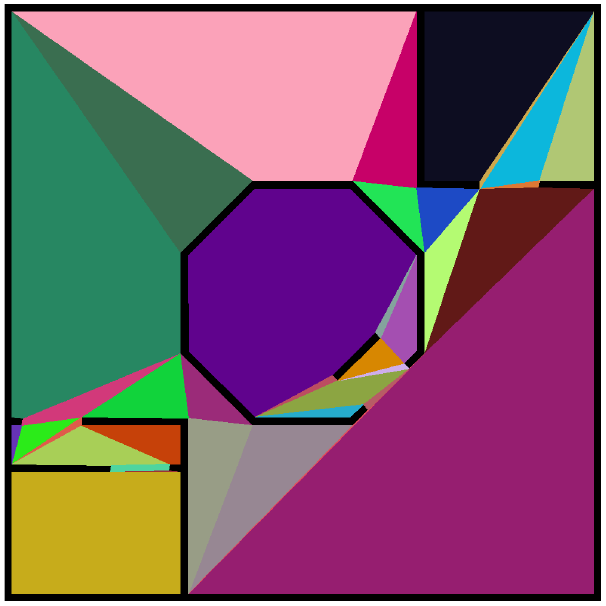}
        \label{fig_map:init2}}
    \hfil
    \subfloat[]{\includegraphics[width=1.0in]{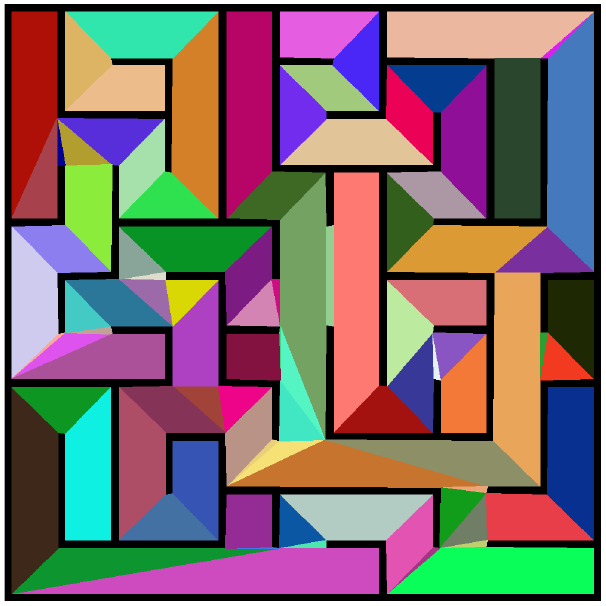}
        \label{fig_map:init3}}
    \hfil
    \subfloat[]{\includegraphics[width=1.0in]{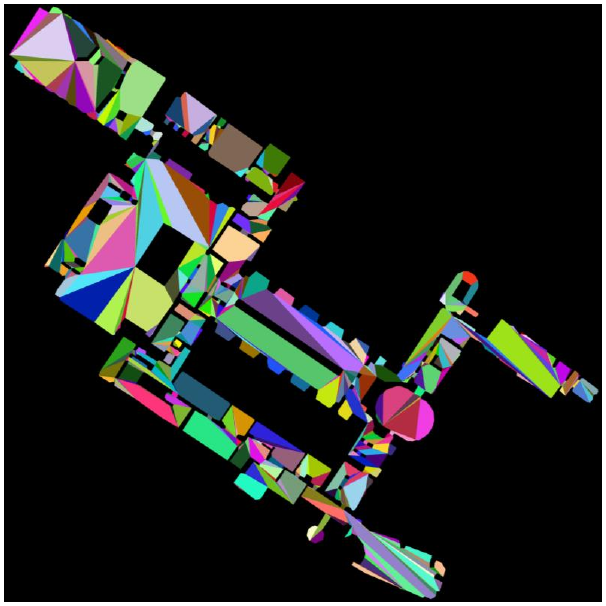}
        \label{fig_map:init4}}
    \\
    \subfloat[]{\includegraphics[width=1.7in]{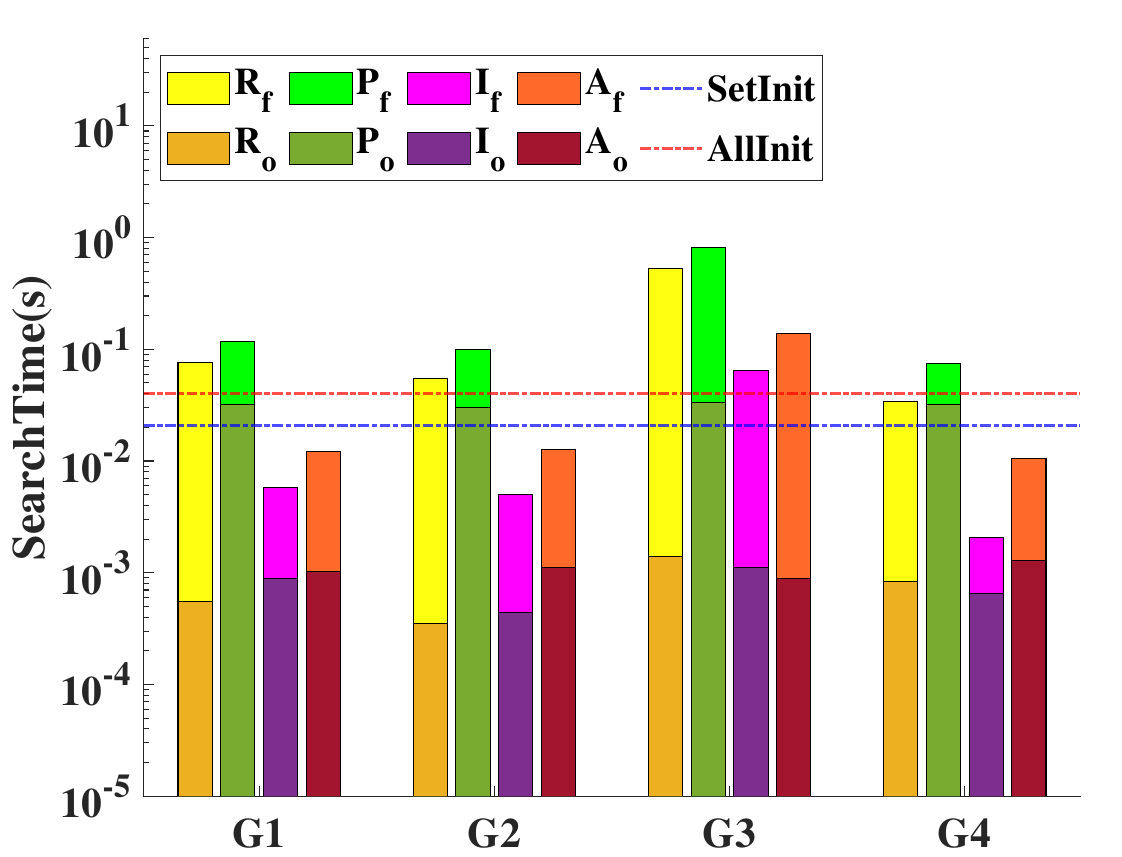}
        \label{fig_map:test1}}
    \hfil
    \subfloat[]{\includegraphics[width=1.7in]{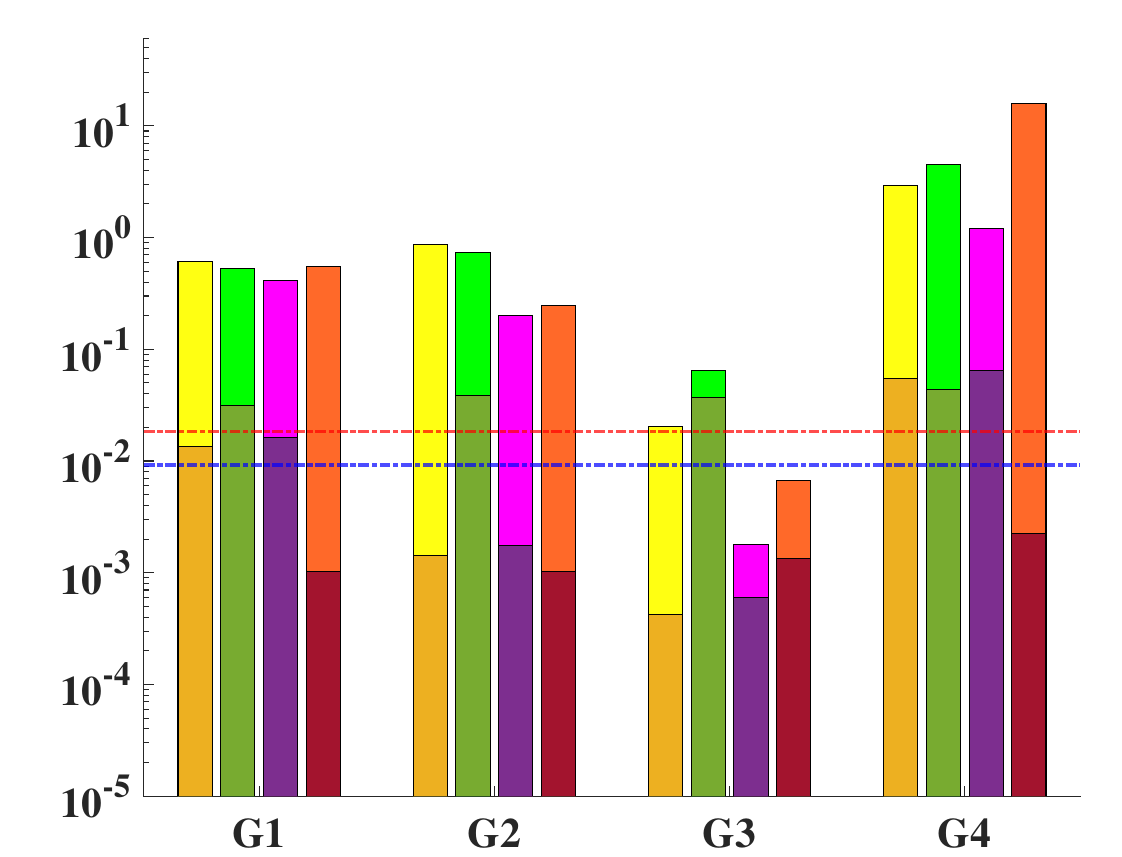}
        \label{fig_map:test2}}
    \hfil
    \subfloat[]{\includegraphics[width=1.7in]{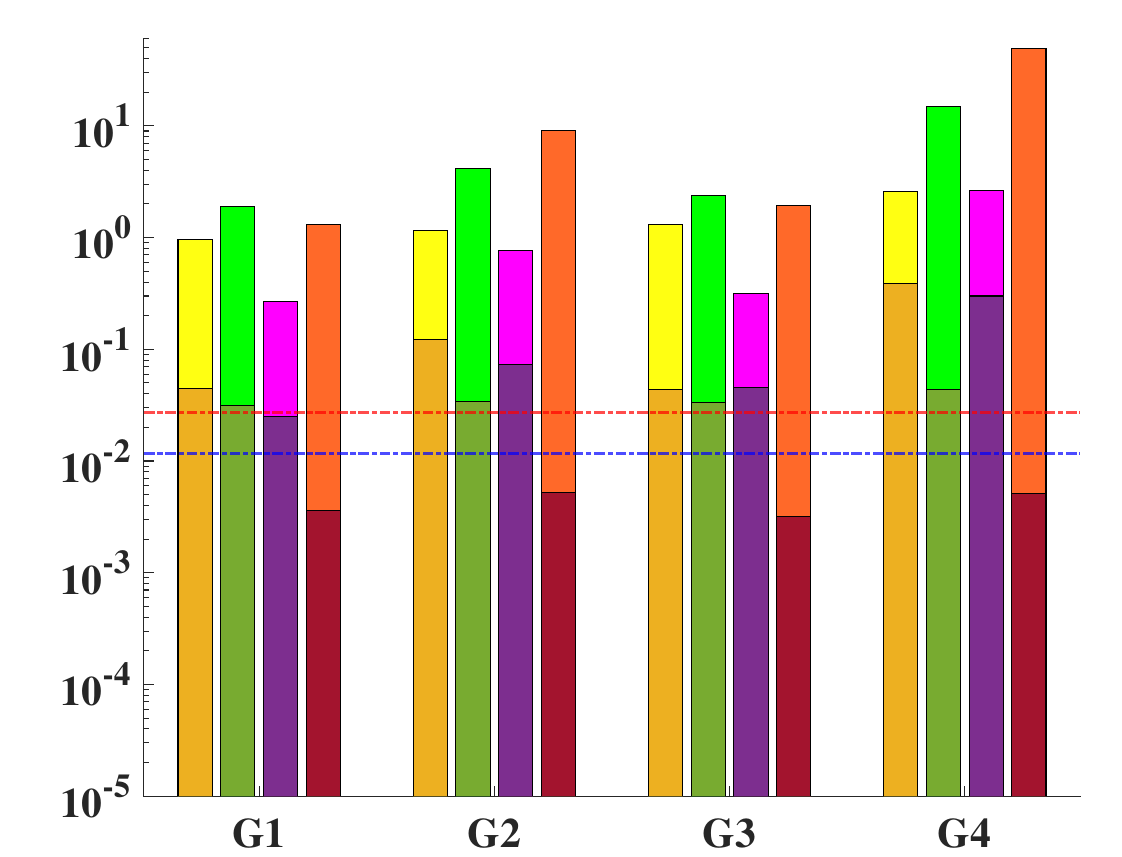}
        \label{fig_map:test3}}
    \hfil
    \subfloat[]{\includegraphics[width=1.7in]{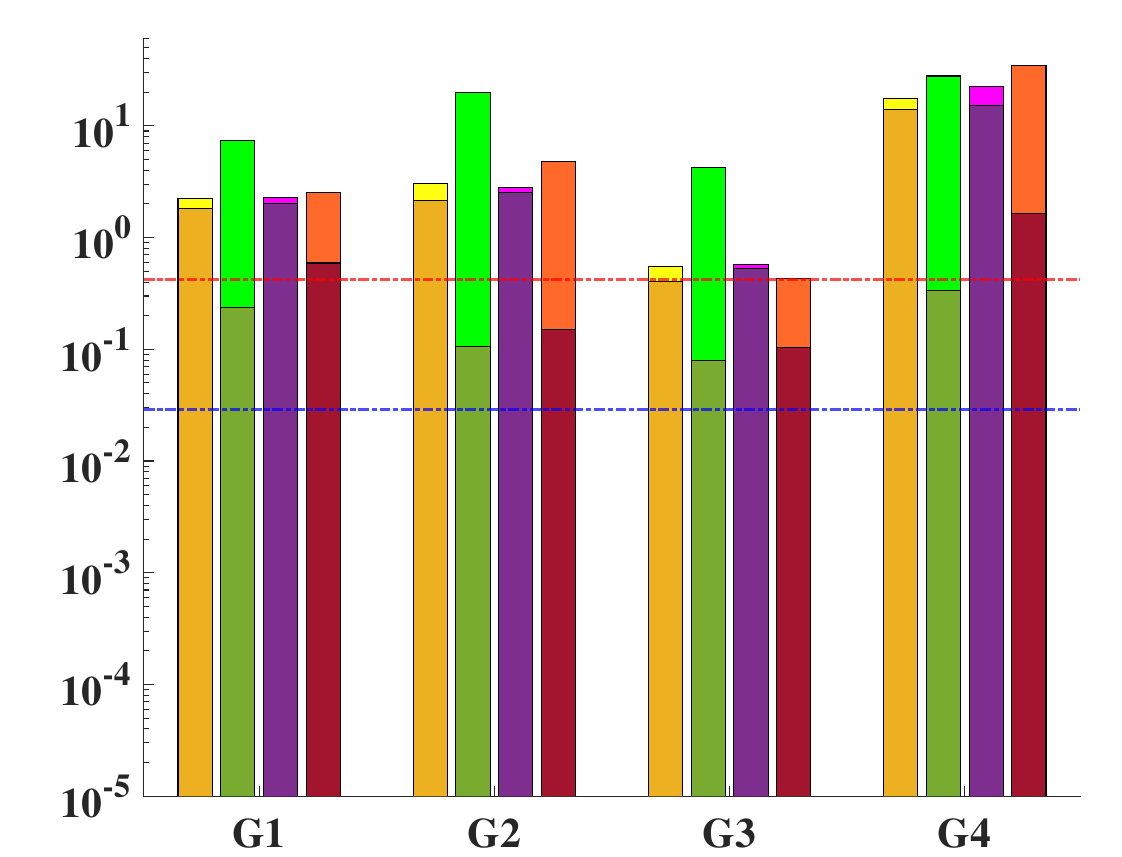}
        \label{fig_map:test4}}
    \caption{Illustration of the experiment map and planning tasks. (a) Cluttered environment, (b) trap, (c) maze and (d) 2D map of the German Museum. Each environment contains four planning tasks starting from S and ending G. (e)-(h): convex division results of each environment. (i)-(l): performance statistics of each planner in the comparative experiment. The completion times of the CDT-Dijkstra algorithms of SetInit and AllInit are indicated by the blue and red lines, respectively, where AllInit is equal to the SetInit time plus the corresponding map initializatio.}
    \label{fig_map}
\end{figure*}
\begin{table}
    \caption{The Abbreviation for Each Criteria Type for Each Algorithm\label{tab:table2}}
    \centering
    \begin{tabular}{|c|c|c|c|c|}
        \hline
        Criteria   & RRT*           & PRM*           & Informed-RRT*  & ABIT*          \\
        \hline
        $t_{init}$ & $R_\mathrm{f}$ & $P_\mathrm{f}$ & $I_\mathrm{f}$ & $A_\mathrm{f}$ \\
        \hline
        $t_{1\%}$  & $R_\mathrm{o}$ & $P_\mathrm{o}$ & $I_\mathrm{o}$ & $A_\mathrm{o}$ \\
        \hline
    \end{tabular}
\end{table}

The experimental results demonstrate that CDT-Dijkstra not only plans the optimal path from all the points to the starting point by running SetInit once, but also has a significant advantage over other algorithms when managing complex tasks. The planning speed of CDT-Dijkstra is one to three orders of magnitude faster than the other four algorithms in the maze and museum environments, especially in the museum environment, where the other algorithms require 10-50 times more time than CDT-Dijkstra to find the initial path; CDT-Dijkstra also performs better in the trap environment. However, based on the experimental results, we also found that CDT-Dijkstra performed relatively poorly in certain simple tasks, such as cluttered environments and Task 3 in trap environments. These phenomena are related to the working principle of the CDT-Dijkstra. When calculating the CDT encoding set of cutlines, CDT-Dijkstra is guided by the topological graph $X_{tp}$, and updates in an orderly manner from the cutline near the initial point to the outside. Therefore, CDT-Dijkstra is not affected by the twists and turns of free space and the narrow gaps of the traps during operation. On the other hand, sampling-based optimal path planning methods are more easily trapped. In the trap environment shown in Fig.~\ref{fig_map}, the sampling-based methods encounter significantly fewer challenges in Task 3 compared to several other tasks, thus they perform better in Task 3 and plan faster than CDT-Dijkstra.

The issue reflected by the experimental data show in Fig.~\ref{fig_map}(i) must be analyzed in conjunction with the number and distribution of independent obstacles in the experimental environment. Here, an independent obstacle refers to the subset of obstacles in the obstacle space $X_{obs}$ that are surrounded by the free space $X_{free}$. The number of independent obstacles directly determines the number of homotopy classes of paths between any two connected points in free space; this relationship is exponential with base $2$. CDT-Dijkstra can remove the most invalid homotopy classes by orderly updating outward from the cutline near the initial point when calculating the CDT encoding set of the cutline. However, when several independent obstacles are uniformly distributed near the initial point, CDT-Dijkstra will waste significant time in Algorithm~\ref{alg:alg3} because, when the computation of the CDT encoding set of a new cutline is completed, the CDT encoding set of the existing adjacent cutlines is easily changed; that is, the condition in line 6 of Algorithm~\ref{alg:alg3} is more likely to be triggered. The data presented in \textbf{Table~\ref{tab:table3}} also support the aforementioned, with the trap and maze environments having a relatively small number of independent obstacles and $\xi_{c_r} \neq \xi_{temp}$ being triggered significantly less often than the other two environments. Comparing the cluttered and museum environments, we find that although the museum has a higher number of independent obstacles, the independent obstacles in cluttered environments are more evenly distributed; therefore, $\xi_{c_r} \neq \xi_{temp}$ is triggered significantly more often in a cluttered environment than in a museum. Therefore, in cluttered environments, CDT-Dijkstra has a slightly weaker performance than other sampling-based methods when performing optimal path planning tasks between two points.

\begin{table}
    \caption{Effect of the Number of Independent Obstacles in the Map\label{tab:table3}}
    \centering
    \begin{tabular}{|c|c|c|c|c|}
        \hline
        \makecell*[c]{Maps} & Cluttered & Trap & Maze & Museum \\
        \hline
        \makecell[c]{Number of                                \\ Independent Obstacles}  & $14$     & $2$  & $1$  & $38$        \\
        \hline
        \makecell[c]{Number of                                \\ Homotopy Classes}  & $16384$   & $4$  & $2$  & $2^{38}$ \\
        \hline
        \makecell[c]{Number of triggers                       \\ for $\xi_{c_r} \neq \xi_{temp}$} & $5437$    & $44$ & $32$ & $962$       \\
        \hline
    \end{tabular}
\end{table}

\section{CONCLUSIONS}
In this study, based on the theoretical framework of the CDT encoder, we propose a planning principle that only considers the points on the cutline, thereby reducing the state space in the path-planning task from 2D to 1D. Based on this principle, for a given $x_{init}$, only the optimal CDT encoder set of all the cutlines can be obtained, and the global optimal path from any $x_{goal}$ to $x_{init}$ can be easily obtained. Based on the aforementioned work, we propose a unique optimal path-planning algorithm called CDT-Dijkstra. Guided by the topological graph $X_{tp}$, CDT-Dijkstra completes the calculation of the CDT encoding set of each cutline in an orderly manner from the cutline near the initial point outward, subsequently returning at an extremely fast global optimal path to any goal point. The CDT-Dijkstra algorithm implements a fast computation of the global optimal path from one point to all the other points in a 2D continuous space. In addition, we also proposed a method for rapidly finding the optimal path in a homotopy class (Algorithm~\ref{alg:alg4}), proved the correctness of this method through theories, and provided an optimization scheme for this method in practical applications. Finally, by testing in a series of environments, the experimental results demonstrate that CDT-Dijkstra not only plans the optimal path from all points at once, but also has a significant advantage over other algorithms when encountering certain complex tasks, that is, CDT-Dijkstra is not affected by the twists and turns of free space and the narrow gaps of traps during operation.

In practical robotics applications, CDT-Dijkstra can be run in a dual-threaded manner, that is, one thread runs SetInit according to the target position of the robot, and the other thread uses GetGoal to update the optimal path of the current position of the robot in real-time during the robot movement. CDT-Dijkstra is ideal for application to tasks in which a robot is required to move from a fixed starting point to a location in space and return; for example, inspection and food delivery robots.

\bibliographystyle{IEEEtran}
\bibliography{IEEEabrv,uref}

\begin{thebibliography}{10}
\providecommand{\url}[1]{#1}
\csname url@samestyle\endcsname
\providecommand{\newblock}{\relax}
\providecommand{\bibinfo}[2]{#2}
\providecommand{\BIBentrySTDinterwordspacing}{\spaceskip=0pt\relax}
\providecommand{\BIBentryALTinterwordstretchfactor}{4}
\providecommand{\BIBentryALTinterwordspacing}{\spaceskip=\fontdimen2\font plus
\BIBentryALTinterwordstretchfactor\fontdimen3\font minus
  \fontdimen4\font\relax}
\providecommand{\BIBforeignlanguage}[2]{{%
\expandafter\ifx\csname l@#1\endcsname\relax
\typeout{** WARNING: IEEEtran.bst: No hyphenation pattern has been}%
\typeout{** loaded for the language `#1'. Using the pattern for}%
\typeout{** the default language instead.}%
\else
\language=\csname l@#1\endcsname
\fi
#2}}
\providecommand{\BIBdecl}{\relax}
\BIBdecl

\bibitem{wang2020overview}
L.~Wang, S.~Liu, H.~Liu, and X.~V. Wang, ``Overview of human-robot
  collaboration in manufacturing,'' in \emph{Proceedings of 5th International
  Conference on the Industry 4.0 Model for Advanced Manufacturing: AMP
  2020}.\hskip 1em plus 0.5em minus 0.4em\relax Springer, 2020, pp. 15--58.

\bibitem{zhang2018multilevel}
X.~Zhang, J.~Wang, Y.~Fang, and J.~Yuan, ``Multilevel humanlike motion planning
  for mobile robots in complex indoor environments,'' \emph{IEEE Transactions
  on Automation Science and Engineering}, vol.~16, no.~3, pp. 1244--1258, 2018.

\bibitem{bopardikar2015multiobjective}
S.~D. Bopardikar, B.~Englot, and A.~Speranzon, ``Multiobjective path planning:
  Localization constraints and collision probability,'' \emph{IEEE Transactions
  on Robotics}, vol.~31, no.~3, pp. 562--577, 2015.

\bibitem{duhautbout2022efficient}
T.~Duhautbout, R.~Talj, V.~Cherfaoui, F.~Aioun, and F.~Guillemard, ``Efficient
  speed planning in the path-time space for urban autonomous driving,'' in
  \emph{2022 IEEE 25th International Conference on Intelligent Transportation
  Systems (ITSC)}.\hskip 1em plus 0.5em minus 0.4em\relax IEEE, 2022, pp.
  1268--1274.

\bibitem{tzafestas2018mobile}
S.~G. Tzafestas, ``Mobile robot control and navigation: A global overview,''
  \emph{Journal of Intelligent \& Robotic Systems}, vol.~91, no.~1, pp. 35--58,
  2018.

\bibitem{naderi2015rt}
K.~Naderi, J.~Rajam{\"a}ki, and P.~H{\"a}m{\"a}l{\"a}inen, ``Rt-rrt* a
  real-time path planning algorithm based on rrt,'' in \emph{Proceedings of the
  8th ACM SIGGRAPH Conference on Motion in Games}, 2015, pp. 113--118.

\bibitem{do2022heat}
H.~Do, A.~V. Le, L.~Yi, J.~C.~C. Hoong, M.~Tran, P.~Van~Duc, M.~B. Vu,
  O.~Weeger, and R.~E. Mohan, ``Heat conduction combined grid-based
  optimization method for reconfigurable pavement sweeping robot path
  planning,'' \emph{Robotics and Autonomous Systems}, vol. 152, p. 104063,
  2022.

\bibitem{zhao2022surgical}
Y.~Zhao, Y.~Wang, J.~Zhang, X.~Liu, Y.~Li, S.~Guo, X.~Yang, and S.~Hong,
  ``Surgical gan: Towards real-time path planning for passive flexible tools in
  endovascular surgeries,'' \emph{Neurocomputing}, 2022.

\bibitem{dijkstra1959note}
E.~W. Dijkstra, ``A note on two problems in connexion with graphs:(numerische
  mathematik, 1 (1959), p 269-271),'' 1959.

\bibitem{hart1968formal}
P.~E. Hart, N.~J. Nilsson, and B.~Raphael, ``A formal basis for the heuristic
  determination of minimum cost paths,'' \emph{IEEE transactions on Systems
  Science and Cybernetics}, vol.~4, no.~2, pp. 100--107, 1968.

\bibitem{lozano1979algorithm}
T.~Lozano-P{\'e}rez and M.~A. Wesley, ``An algorithm for planning
  collision-free paths among polyhedral obstacles,'' \emph{Communications of
  the ACM}, vol.~22, no.~10, pp. 560--570, 1979.

\bibitem{lavalle2001randomized}
S.~M. LaValle and J.~J. Kuffner~Jr, ``Randomized kinodynamic planning,''
  \emph{The international journal of robotics research}, vol.~20, no.~5, pp.
  378--400, 2001.

\bibitem{karaman2011sampling}
S.~Karaman and E.~Frazzoli, ``Sampling-based algorithms for optimal motion
  planning,'' \emph{The international journal of robotics research}, vol.~30,
  no.~7, pp. 846--894, 2011.

\bibitem{gammell2018informed}
J.~D. Gammell, T.~D. Barfoot, and S.~S. Srinivasa, ``Informed sampling for
  asymptotically optimal path planning,'' \emph{IEEE Transactions on Robotics},
  vol.~34, no.~4, pp. 966--984, 2018.

\bibitem{gammell2020batch}
------, ``Batch informed trees (bit*): Informed asymptotically optimal anytime
  search,'' \emph{The International Journal of Robotics Research}, vol.~39,
  no.~5, pp. 543--567, 2020.

\bibitem{strub2022adaptively}
M.~P. Strub and J.~D. Gammell, ``Adaptively informed trees (ait*) and effort
  informed trees (eit*): Asymmetric bidirectional sampling-based path
  planning,'' \emph{The International Journal of Robotics Research}, vol.~41,
  no.~4, pp. 390--417, 2022.

\bibitem{kloetzer2015optimizing}
M.~Kloetzer, C.~Mahulea, and R.~Gonzalez, ``Optimizing cell decomposition path
  planning for mobile robots using different metrics,'' in \emph{2015 19th
  international conference on system theory, control and computing
  (ICSTCC)}.\hskip 1em plus 0.5em minus 0.4em\relax IEEE, 2015, pp. 565--570.

\bibitem{li2020new}
Z.~Li, Z.~Zhang, H.~Liu, and L.~Yang, ``A new path planning method based on
  concave polygon convex decomposition and artificial bee colony algorithm,''
  \emph{International Journal of Advanced Robotic Systems}, vol.~17, no.~1, p.
  1729881419894787, 2020.

\bibitem{gonzalez2017comparative}
R.~Gonzalez, M.~Kloetzer, and C.~Mahulea, ``Comparative study of trajectories
  resulted from cell decomposition path planning approaches,'' in \emph{2017
  21st International Conference on System Theory, Control and Computing
  (ICSTCC)}.\hskip 1em plus 0.5em minus 0.4em\relax IEEE, 2017, pp. 49--54.

\bibitem{chi2021generalized}
W.~Chi, Z.~Ding, J.~Wang, G.~Chen, and L.~Sun, ``A generalized voronoi
  diagram-based efficient heuristic path planning method for rrts in mobile
  robots,'' \emph{IEEE Transactions on Industrial Electronics}, vol.~69, no.~5,
  pp. 4926--4937, 2021.

\bibitem{huang2021path}
S.-K. Huang, W.-J. Wang, and C.-H. Sun, ``A path planning strategy for
  multi-robot moving with path-priority order based on a generalized voronoi
  diagram,'' \emph{Applied Sciences}, vol.~11, no.~20, p. 9650, 2021.

\bibitem{munkres2018elements}
J.~R. Munkres, \emph{Elements of algebraic topology}.\hskip 1em plus 0.5em
  minus 0.4em\relax CRC press, 2018.

\bibitem{bhattacharya2012search}
S.~Bhattacharya, M.~Likhachev, and V.~Kumar, ``Search-based path planning with
  homotopy class constraints in 3d,'' in \emph{Proceedings of the AAAI
  Conference on Artificial Intelligence}, vol.~26, no.~1, 2012, pp. 2097--2099.

\bibitem{mccammon2021topological}
S.~McCammon and G.~A. Hollinger, ``Topological path planning for autonomous
  information gathering,'' \emph{Autonomous Robots}, vol.~45, pp. 821--842,
  2021.

\bibitem{wakulicz2023topological}
J.~Wakulicz, K.~M.~B. Lee, T.~Vidal-Calleja, and R.~Fitch, ``Topological
  trajectory prediction with homotopy classes,'' \emph{arXiv preprint
  arXiv:2301.09821}, 2023.

\bibitem{arxivLiu2023homotopy}
\BIBentryALTinterwordspacing
J.~Liu, M.~Fu, A.~Liu, W.~Zhang, B.~Chen, R.~Prakapovich, and U.~Sychou,
  ``Homotopy path class encoder based on convex dissection topology,'' 2023.
  [Online]. Available: \url{https://arxiv.org/abs/2302.13026}
\BIBentrySTDinterwordspacing

\bibitem{strub2020advanced}
M.~P. Strub and J.~D. Gammell, ``Advanced bit*(abit*): Sampling-based planning
  with advanced graph-search techniques,'' in \emph{2020 IEEE International
  Conference on Robotics and Automation (ICRA)}.\hskip 1em plus 0.5em minus
  0.4em\relax IEEE, 2020, pp. 130--136.

\end{thebibliography}

\end{document}